\newtheorem{prop}{Proposition}
\title{LUNAR: Unifying Local Outlier Detection Methods via Graph Neural Networks}
\author {
    Adam Goodge,\textsuperscript{\rm 1,3}
    Bryan Hooi, \textsuperscript{\rm 1,2}
    Ng See Kiong \textsuperscript{\rm 1,2}
    Ng Wee Siong \textsuperscript{\rm 3}
}
\begin{document}

\maketitle

\section*{Abstract}

Many well-established anomaly detection methods use the distance of a sample to those in its local neighbourhood: so-called `local outlier methods', such as LOF and DBSCAN. They are popular for their simple principles and strong performance on unstructured, feature-based data that is commonplace in many practical applications. However, they cannot learn to adapt for a particular set of data due to their lack of trainable parameters. In this paper, we begin by unifying local outlier methods by showing that they are particular cases of the more general message passing framework used in graph neural networks. This allows us to introduce learnability into local outlier methods, in the form of a neural network, for greater flexibility and expressivity: specifically, we propose LUNAR, a novel, graph neural network-based anomaly detection method. LUNAR learns to use information from the nearest neighbours of each node in a trainable way to find anomalies. We show that our method performs significantly better than existing local outlier methods, as well as state-of-the-art deep baselines. We also show that the performance of our method is much more robust to different settings of the local neighbourhood size.

\section{Introduction}\label{section:Introduction}

Unsupervised anomaly detection is the task of detecting anomalies within a set of data without relying on ground truth labels of known anomalies. It is an extremely important task in a wide range of practical applications and has therefore received a great amount of research interest. As anomalies tend to be much rarer than normal data, labelled anomalies are difficult to obtain in the quantity needed to adequately train supervised techniques.

Many well-established unsupervised methods detect anomalies by measuring the distance of a point to its nearest neighbouring points: so-called \textbf{local outlier methods}, such as \textsc{LOF} and \textsc{DBSCAN}. These methods are very popular in practice due to their straightforward principles and assumptions, as well as their interpretable outputs. In our experiments, we also find that their performance also holds up favourably against more recent, deep learning-based methods. The latter have to fully embed knowledge about normal and abnormal regions of the data space in their network parameters. They are mostly designed for highly structured, high-dimensional data such as images, but their performance often struggles for the less structured, feature-based data that is commonly used in many applications. As such, local outlier methods remain the default choice is many areas.

A range of local outlier methods have been developed, each with its own unique formulation and properties. However, many of them also share common characteristics with each other. In this paper, our first contribution is to unify local outlier methods under a simple, general framework based on the message passing scheme used in \textbf{graph neural networks} (GNN). We demonstrate that many popular methods, such as \textsc{KNN}, \textsc{LOF} and \textsc{DBSCAN}, can be seen as particular cases of this more general message passing framework.

Despite their popularity, local outlier methods lack the capacity to learn to optimise for or adapt to a particular set of data, e.g. through trainable parameters. Furthermore, in an unsupervised setting, there is no straightforward way to find optimal hyper-parameter settings, such as the number of nearest neighbours, which is extremely important and greatly affects performance. In this paper, we also propose a novel method named \textsc{LUNAR} (\textbf{L}earnable \textbf{U}nified \textbf{N}eighbourhood-based \textbf{A}nomaly \textbf{R}anking), which is based on the same message passing framework for local outlier methods but addresses their shortcomings by enabling \textbf{learnability} via graph neural networks.

In summary, we make the following contributions:

\begin{itemize}
    \item We show that many popular local outlier methods, such as \textsc{KNN}, \textsc{LOF} and \textsc{DBSCAN}, can be unified under a single framework based on graph neural networks.
    \item We use this framework to develop a novel, GNN-based anomaly detection method (\textsc{LUNAR}) which is more flexible and adaptive to a given set of data than local outlier methods due to its trainable parameters.
    \item We show that our method gives better performance\footnote{Code available at https://github.com/agoodge/LUNAR} than popular classical methods, including local outlier methods, as well as state-of-the-art deep learning-based methods in anomaly detection. We also show that its performance is much more robust to different settings of the local neighbourhood size than local outlier methods.
\end{itemize}

\section{Related Work}

\paragraph{Neighbourhood-based Anomaly Detection}

Besides notable examples like \textsc{OC-SVM} \cite{scholkopf2001estimating} and \textsc{IForest} \cite{liu2008isolation}, most classical anomaly detection methods directly measure the distance of a point to its nearest neighbours to detect anomalies, which we call `local outlier methods'. They rely on the assumption that anomalies are in sparse regions of the data space, far away from highly dense clusters of normal points. Points that are close to their neighbours are more likely to be normal themselves, whilst points far from their neighbours are more likely to be anomalies.

\textsc{KNN} \cite{angiulli2002fast} uses the distance to the $k^{th}$ nearest neighbour as the anomaly score. Alternatively, \textsc{DBSCAN} \cite{Ester1996}, which simultaneously learns to to cluster normal data while also detecting outliers, uses the number of points within a pre-defined distance.

Local Outlier Factor (\textsc{LOF}) \cite{Breunig2000} measures distances to define a density measure and compares this density to neighbouring points. Various extensions and variants of have been developed, including but not limited to: Local Outlier Probabilities (\textsc{LoOP}) \cite{kriegel2009loop}, Connectivity-based (\textsc{COF}) \cite{tang2002enhancing},  Local Correlation Integral (\textsc{LOCI}) \cite{papadimitriou2003loci}, Influenced Outlierness (\textsc{INFLO}) \cite{Jin2006}, and Subspace Outlier Detection (SOD) \cite{kriegel2009outlier}.

These methods suffer from a lack of learnability: they do not use the information in the training set to optimise model parameters for better anomaly scoring. Instead, they are based on pre-defined heuristics and hyper-parameters. These settings strongly influence performance, yet the optimal settings are very difficult to validate before deployment without access to labelled anomalies.

\paragraph{Deep Learning-based Anomaly Detection}

Deep models have improved state-of-the-art performance in anomaly detection for highly structured, high dimensional data especially. Autoencoders are particularly popular, with the reconstruction error acting as the anomaly score. Normal samples are assumed to be reconstructed with lower error than anomalies. They have been used with fully-connected \cite{Sakurada2014}, convolutional \cite{Zhao2017d} or recurrent \cite{Malhotra2015} layers for different data applications. Variational \cite{An2015}, denoising \cite{Feng2015} and adversarial \cite{Vu2019} autoencoders have also been used. The reconstruction errors from each encoder-decoder layer pair are fused together in \cite{Kim2019}. The autoencoder latent encodings are optimised directly in \cite{Goodge} to improve robustness against adversarial perturbations.

Others use deep models as feature extractors for a secondary anomaly-detecting module, such as \textsc{KNN} \cite{bergman2020deep}, \textsc{KDE} \cite{nicolau2016hybrid}, \textsc{DBSCAN} \cite{amarbayasgalan2018unsupervised} or autoregressive models \cite{Abati2019}. Zong et al. \shortcite{Bo2018} simultaneously train an autoencoder for feature extraction with a Gaussian mixture model in the latent space for anomaly detection. Ruff et al. \shortcite{Ruff2018} learn a normality-encoding hypersphere in the latent space and the anomaly score is the distance from the centre. Generative adversarial networks use the ability of the generator to generate an unseen sample to indicate its anomalousness \cite{Schlegl2017,zenati2019}.

There has been some interest in GNNs for anomaly detection in graph data, such as sensor networks \cite{deng2021graph,cai2020structural,zheng2019addgraph}. Our method also uses GNNs, though it is distinct from these works as it is designed for unstructured, feature-based data rather than graphs.

\section{Background}\label{section:Background}

\subsection{Local Outlier Methods}\label{section:LOM}

`Local outlier methods' refers to those methods which directly use the distance of a point to its $k$ nearest neighbours to determine its anomalousness. We now detail \textsc{KNN} and \textsc{LOF}.

\paragraph{KNN} The anomaly score of a point $\mathbf{x}_i$ is its distance to its $k^{th}$ nearest neighbour:

\begin{equation}\label{eq:KNN}
    \textsc{KNN}(\mathbf{x}_i) = \mathsf{dist}(\mathbf{x}_i, \mathbf{x}_i^{(k)})
\end{equation}
where $\mathbf{x}_i^{(k)}$ is the $k^{th}$ nearest neighbour of $\mathbf{x}_i$. Euclidean distance is most common, though any distance measure could be used depending on its suitability to the data type.
 
\paragraph{LOF} The Local Outlier Factor instead uses the `reachability distance', which is defined for $\mathbf{x}_i$ from $\mathbf{x}_j$ as:

\begin{equation}\label{eq:reachability_distance}
\mathsf{reach}_{k}(\mathbf{x}_i,\mathbf{x}_j) =\mathsf{max}\{ \mathsf{k\text{-}dist}(\mathbf{x}_j), \mathsf{dist}(\mathbf{x}_i,\mathbf{x}_j)\}
\end{equation}
where $\mathsf{k\text{-}dist}(\mathbf{x}_j)$ is equal to $\mathsf{dist}(\mathbf{x}_j,\mathbf{x}_{j}^{(k)})$. This is used to calculate the `local reachability density` of a point: 

\begin{equation}\label{eq:lrd_score}
\mathsf{lrd}_k (A) := \left(\frac{\underset{j \in \mathcal{N}_i}{\sum} \mathsf{reach}_{k}(\mathbf{x}_i,\mathbf{x}_j)}{|\mathcal{N}_i|}\right)^{-1}
\end{equation}
where $\mathcal{N}_i$ is the set of $k$ nearest neighbours of $\mathbf{x}_i$.  Finally, this density measure is compared with that of neighbouring points to determine the local outlier factor:

\begin{equation}\label{eq:lof_score}
\textsc{LOF}(\mathbf{x}_i) = \frac{\sum_{j\in \mathcal{N}_i} \mathsf{lrd}_k (\mathbf{x}_j)}{|\mathcal{N}_i| \cdot \mathsf{lrd}_k (\mathbf{x}_i)}.
\end{equation}

\subsection{Graph Neural Networks}\label{section:GNN}

Graph neural networks (GNN) operate on a graph $G(V,E)$, in which a node, $i \in V$, is connected to an adjacent node, $j \in V$, via an edge $(j,i) \in E$. Edges can be undirected, in which case information flows in both directions between adjacent nodes. Alternatively, if the edges are directed, then information only flows from the source node to target node, i.e. from $j$ to $i$ along the edge $(j,i)$. Nodes and edges can, but need not, have feature vectors, denoted by $\mathbf{x}_i$ and $e_{j,i}$ for node $i$ and edge $(j,i)$ respectively.

GNNs have become increasingly popular in a range of graph-related applications, such as social networks \cite{fan2019graph} and traffic networks \cite{cui2019traffic}. Of particular interest here is the node classification task, which involves learning successive latent representations of nodes through the network layers in order to predict the class label of each node. This relies on a message passing scheme, made up of \textit{message}, \textit{aggregation} and \textit{update} steps. The message function ($\phi$) determines the information to be sent to the node in question from each neighbour. The aggregation function ($\square$) summarises these incoming messages into one message, for example by average or max-pooling. Finally, the update function ($\gamma$) uses this aggregated message and the current representation of the node to compute its subsequent representation. In summary, the $k^{th}$ layer of a GNN calculates the hidden representation of a node via the following \cite{gilmer2017neural}:

\begin{align}\label{eq:GNN}
& \mathbf{h}_{\mathcal{N}_i}^{(k)} = \underset{j  \in \mathcal{N}_i}{\square} \phi^{(k)} ( \mathbf{h}_i^{(k-1)},\mathbf{h}_j^{(k-1)}, \mathbf{e}_{j,i} ), \nonumber \\
& \mathbf{h}_i^{(k)} = \gamma^{(k)} (\mathbf{h}_i^{(k-1)},\mathbf{h}_{\mathcal{N}_i}^{(k)}). 
\end{align}
where $h_i^{(0)} = \mathbf{x}_i$ and $\mathcal{N}_i$ is the set of adjacent nodes to $i$. $\mathbf{h}_{\mathcal{N}_i}^{(k)}$ is the aggregation of the messages from its neighbours.

\section{Problem Definition}

We now define the unsupervised anomaly detection problem of interest in this paper. We assume to have $m$ normal training samples $\mathbf{x}_1^{\text{(train)}},..., \mathbf{x}_m^{\text{(train)}} \in \mathbb{R}^d$ and $n$ testing samples, $\mathbf{x}_1^{\text{(test)}}, ..., \mathbf{x}_n^{\text{(test)}} \in \mathbb{R}^d$, each of which may be normal or anomalous. For a test sample $\mathbf{x}_i^{\text{(test)}}$, our algorithm should output an \textbf{anomaly score} $s(\mathbf{x}_i^{\text{(test)}})$ that is low (or high) if $\mathbf{x}_i^{\text{(test)}}$ is normal (or anomalous).

In local outlier methods, the fundamental question is:

\textit{How should the distances of a sample $\mathbf{x}_i^{\text{(test)}}$ to its nearest neighbours be used in computing its anomaly score?}

In the following section, we show that many local outlier methods can be seen as particular cases of the message passing framework used by GNNs. 

\section{Unifying Framework}\label{sec:framework}

Local outlier methods collect information from the nearest neighbouring points to compute a statistic to indicate the anomalousness of a given point. This process fits within the GNN message passing framework outlined in (\ref{eq:GNN}). For ease of understanding, we show this using the example of \textsc{KNN} in particular.

\subsection*{Example: \textsc{KNN}}

Recall that \textsc{KNN} computes the anomaly score based on the distance to the $k^{th}$ nearest neighbour of a point. 

In the context of message passing, each data sample corresponds to one node in a graph and node $i$ is connected to each of its $k$ nearest neighbours, $j \in \mathcal{N}_i$, via a directed edge $(j,i)$, with edge feature $\mathbf{e}_{j,i}$ equal to the distance between them ($k$-NN graph):

\begin{equation}
    e_{j,i} = \begin{cases}
    \mathsf{dist}(\mathbf{x}_i,\mathbf{x}_j) \ \text{if} \ j \in \mathcal{N}_i. \\
    0 \ \text{otherwise}.
    \end{cases}
\end{equation}
These edges are directed as $j \in \mathcal{N}_i \centernot\implies i \in \mathcal{N}_j$, so information flows along edge $(j,i)$ only from the source node $j$ to target node $i$. With this graph, we now show that \textsc{KNN} can be explained in terms of the \textit{message}, \textit{aggregation} and \textit{update} functions in (\ref{eq:GNN}).

\paragraph{Message} \textsc{KNN} collects the distances of a node to its nearest neighbours:

\begin{equation}
    \phi^{(1)} := \mathbf{e}_{j,i}.
\end{equation}

\paragraph{Aggregation} It then outputs the maximum of these distances (i.e. max-pooling):

\begin{equation}
    \mathbf{h}_{\mathcal{N}_i}^{(1)}:= \underset{j\in \mathcal{N}_i}{\mathsf{max}} \ \phi^{(1)}
\end{equation}

\paragraph{Update}
Finally, it outputs this aggregated message as the anomaly score: 

\begin{equation}
\gamma^{(1)} := \mathbf{h}_{\mathcal{N}_i}^{(1)}
\end{equation}

\begin{prop}
\textsc{KNN} is a special case of the message passing scheme formulated in (\ref{eq:GNN}).
\end{prop}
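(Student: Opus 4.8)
The plan is to prove this by direct substitution: I would verify that the three functions specified above---the message $\phi^{(1)} := \mathbf{e}_{j,i}$, the max-pooling aggregation $\square = \mathsf{max}$, and the identity update $\gamma^{(1)} := \mathbf{h}_{\mathcal{N}_i}^{(1)}$---when plugged into the general message passing recursion (\ref{eq:GNN}), reproduce exactly the \textsc{KNN} anomaly score (\ref{eq:KNN}). Since the proposed instantiation uses only a single layer, there is no recursion to unroll, and the argument reduces to evaluating one pass of (\ref{eq:GNN}) on the directed $k$-NN graph defined above.

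First I would instantiate the message step. On the $k$-NN graph every neighbour $j \in \mathcal{N}_i$ carries the edge feature $\mathbf{e}_{j,i} = \mathsf{dist}(\mathbf{x}_i,\mathbf{x}_j)$, so the choice $\phi^{(1)} := \mathbf{e}_{j,i}$ means each neighbour transmits its distance to $\mathbf{x}_i$. Applying the aggregation step then gives
\begin{equation*}
\mathbf{h}_{\mathcal{N}_i}^{(1)} = \underset{j \in \mathcal{N}_i}{\mathsf{max}} \ \mathsf{dist}(\mathbf{x}_i,\mathbf{x}_j),
\end{equation*}
and the update step $\gamma^{(1)} := \mathbf{h}_{\mathcal{N}_i}^{(1)}$ passes this value through unchanged, so the single-layer output is $\mathbf{h}_i^{(1)} = \mathsf{max}_{j \in \mathcal{N}_i} \ \mathsf{dist}(\mathbf{x}_i,\mathbf{x}_j)$.

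The one substantive point---the only step that is more than bookkeeping---is the identity
\begin{equation*}
\underset{j \in \mathcal{N}_i}{\mathsf{max}} \ \mathsf{dist}(\mathbf{x}_i,\mathbf{x}_j) = \mathsf{dist}(\mathbf{x}_i,\mathbf{x}_i^{(k)}).
\end{equation*}
This holds because $\mathcal{N}_i$ is by definition the set of the $k$ points closest to $\mathbf{x}_i$, so the largest distance attained over this set is the distance to the farthest of them, namely the $k^{th}$ nearest neighbour $\mathbf{x}_i^{(k)}$ (breaking ties arbitrarily so that the maximiser is well defined). Combining this with the previous display yields $\mathbf{h}_i^{(1)} = \mathsf{dist}(\mathbf{x}_i,\mathbf{x}_i^{(k)}) = \textsc{KNN}(\mathbf{x}_i)$, which is precisely (\ref{eq:KNN}), establishing the claim. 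I expect no genuine obstacle here: the result is essentially a definitional check, and the only place warranting a sentence of justification is the identity above showing that max-pooling over the neighbourhood recovers the distance to the $k^{th}$ neighbour.
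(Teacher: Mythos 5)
Your proposal is correct and follows essentially the same route as the paper's proof: direct substitution of the stated \emph{message}, \emph{aggregation}, and \emph{update} functions into the one-layer instance of (\ref{eq:GNN}). The only difference is that you explicitly justify the identity $\max_{j \in \mathcal{N}_i} \mathsf{dist}(\mathbf{x}_i,\mathbf{x}_j) = \mathsf{dist}(\mathbf{x}_i,\mathbf{x}_i^{(k)})$, which the paper leaves implicit; this is a welcome (if minor) addition, not a different argument.
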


\begin{proof}

The \textsc{KNN} anomaly score can be calculated using the \textit{message}, \textit{aggregation} and \textit{update} functions formulated above. By substituting these functions into their appropriate counterparts in (\ref{eq:GNN}), we arrive at the following:

\begin{equation}
    \textsc{KNN}(i) = \underset{j\in \mathcal{N}_i}{\mathsf{max}}(\mathbf{e}_{j,i}),
\end{equation}
which is a special, one-layer case of the message passing framework in (\ref{eq:GNN}).
\end{proof}

A similar analysis can be applied to \textsc{LOF} and \textsc{DBSCAN}, which are instead two-layer cases with two rounds of message passing. For example, in \textsc{LOF}, the first layer calculates the local reachability density as in (\ref{eq:lrd_score}) and the second layer calculates the local outlier score as in (\ref{eq:lof_score}). Table \ref{tab:LOF} formalizes these connections, and an extended version with more local outlier methods can be found in the supplementary material.

\begin{table}
\centering
\begin{tabular}{cccc}
\toprule
Step & \textsc{KNN} & \textsc{LOF} & \textsc{DBSCAN} \\
\midrule
$\mathbf{e}_{j,i}$ & $\mathsf{dist}(\mathbf{x}_i,\mathbf{x}_j)$ & $\mathsf{reach}(\mathbf{x}_i,\mathbf{x}_j)$ & $\mathsf{dist}(\mathbf{x}_i,\mathbf{x}_j)$\\
$\phi^{(1)}$ & $\mathbf{e}_{j,i}$ & $\mathbf{e}_{j,i}$ & $H(\epsilon - \mathbf{e}_{j,i})$ \\
$\square^{(1)}$ & $\mathsf{max}$ & $\mathsf{sum}$ & $\mathsf{sum}$\\
$\gamma^{(1)}$ & $\mathbf{h}_{\mathcal{N}_i}^{(1)}$ & $1/\mathbf{h}_{\mathcal{N}_i}^{(1)}$ & $H(\mathbf{h}_{\mathcal{N}_i}^{(1)} -$ minPts) \\
$\phi^{(2)}$ & - & $\mathbf{h}_{j}^{(1)}/\mathbf{h}_{i}^{(1)}$ & $\mathbf{h}_j^{(1)}$\\
$\square^{(2)}$ & - & $\mathsf{mean}$ & $\mathsf{max}$\\
$\gamma^{(2)}$ & - & $\mathbf{h}_{\mathcal{N}_i}^{(2)}$ &  $1 - \mathbf{h}_{\mathcal{N}_i}^{(2)}$\\
\bottomrule
\end{tabular}
\caption{Local outlier methods as they relate to the message passing framework defined in (\ref{eq:GNN}). $H$ refers to the Heaviside function.}
\label{tab:LOF}
\end{table}

\section{Motivation: The Importance of Learnability}

Local outlier methods lack trainable parameters which enable them to optimise their performance for a given training set. In this section, we show that this hinders their overall accuracy. To do this, we compare the performance of \textsc{LOF} against our novel methodology \textsc{LUNAR}, on a toy training dataset of $1000$ points sampled from four Gaussian distributions. As perfectly pure training sets are rare in practice, we also generate $15$ points from a uniform distribution within the data bounds. These points are much rarer and sparser than the others, so they should not significantly influence the predicted normal regions.

\begin{figure}
    \centering
    \includegraphics[width=\linewidth]{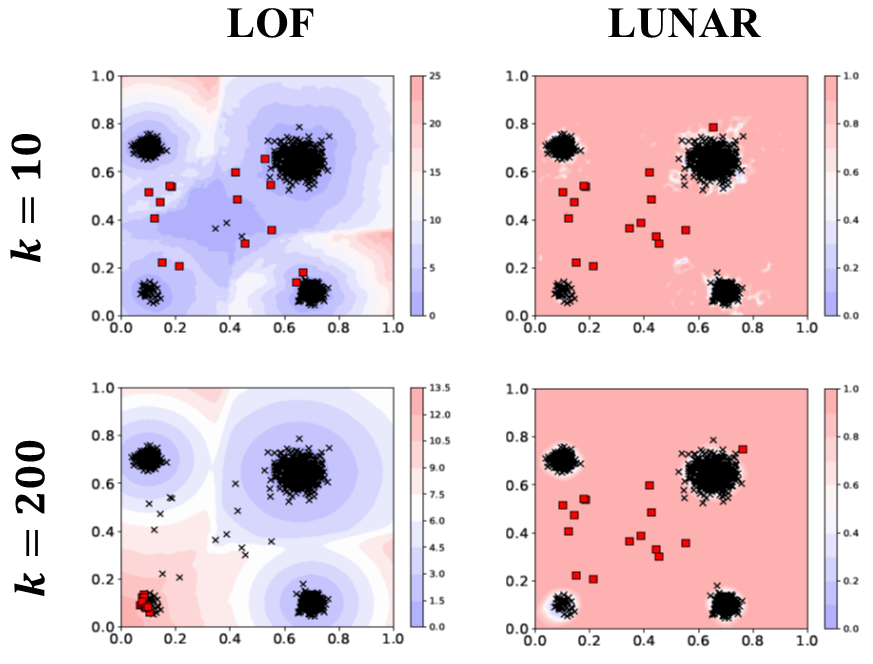}
    \caption{Contours of the scores assigned by \textsc{LOF} versus \textsc{LUNAR}. Red indicates a high (anomalous) score and blue indicates a low (normal) score. Points are marked by black crosses and those with the top $15$ highest assigned scores are marked by red squares.}
    \label{fig:contour}
\end{figure}

In Figure \ref{fig:contour}, low scores (blue) indicate a predicted normal region while high scores (red) indicate a predicted anomalous region. The points with one of the top $15$ anomaly scores are indicated with red squares. We test the methods with a small and large value for the hyperparameter dictating the number of nearest neighbours ($k$).

With low $k$, the \textsc{LOF} score is low around the four clusters, but also low far away from these clusters with little or no nearby training points. The central outlying region especially appears normal due to the strong influence of the relative sparsity of the very few points in the area. Conversely, with large $k$, the \textsc{LOF} score is erroneously high for the smaller cluster in the bottom-left corner. \textsc{LOF} fails to recognise the clusters existence as it contains fewer points than $k$, instead predicting all nearby points to be anomalous. These issues are challenging as local outlier methods lack the capacity to learn a more optimal scoring mechanism from the data directly. 

In comparison, the learnability of \textsc{LUNAR} enables it to perform better and more robustly across $k$: the regions assigned with normal or anomalous scores are a much closer fit to the training data, and the highest anomaly scores are given to the sparse, central points more accurately. We now describe its methodology in full.

\section{\textsc{LUNAR}: Methodology}\label{sec:lunar}

\paragraph{Overview}

Our methodology involves a one layer graph neural network as per the message passing framework described by (\ref{eq:GNN}). We represent a set of data as a graph, with a node corresponding to each data sample and directed edges connecting a target node to a set of source nodes, which are the nearest neighbours of the samples. For a given target node, the network utilises information from its its nearest neighbouring nodes to learn its anomaly score. It differs from other GNN implementations for several reasons:

\begin{itemize}
    \item We construct the $k$-NN graph of any feature-based, tabular dataset, rather than being restricted to graph datasets.
    \item We use a node's distances to its $k$ nearest neighbours as input, which is more generalizable than using its feature vector.
    \item We use a learnable message aggregation function, whereas most GNNs use a fixed aggregation approach.
\end{itemize}

\subsection{Model Design}

We now describe the methodology used in \textsc{LUNAR} in more detail, starting with how the graph is formulated.

\paragraph{Nearest Neighbourhood Graph}

For a data sample $\mathbf{x}_i$, we define a target node $i$ and edge $(j,i)$ connecting it to a source node $j$ for all $j$ where $\mathbf{x}_j$ is in the set of $k$ nearest neighbours to $\mathbf{x}_i$. The edge feature vector is equal to the Euclidean distance between the two points:

\begin{equation}\label{eq:knn_graph}
    \mathbf{e}_{j,i} = \begin{cases}
    \mathsf{dist}(\mathbf{x}_i,\mathbf{x}_j) \ \text{if} \ j \in \mathcal{N}_i. \\
    0 \ \text{otherwise}.
    \end{cases}
\end{equation}

As training samples are all assumed to be normal, we only search for nearest neighbours among training samples,  so that anomalies cannot influence the neighbourhood. With this, we define the \textit{message}, \textit{aggregation} and \textit{update} functions in (\ref{eq:GNN}) as follows:

\paragraph{Message}
The message passed from source node $j$ to target node $i$ along edge $(j,i)$ is equal to the edge feature $e_{j,i}$ (i.e. the distance between the points):

\begin{equation}
\phi^{(1)} := \mathbf{e}_{j,i}.
\end{equation}

\paragraph{Aggregation}
Rather than a fixed average or max-pooling, we use a learnable aggregation, which is suitable for our setting as we are dealing with node neighbourhoods of a fixed size ($k$). Our message aggregation involves concatenating them to give a $k$-dimensional vector, $\mathbf{e}^{(i)}$, where each entry represents the distance of $\mathbf{x}_i$ to its corresponding neighbour:

\begin{equation}\label{eq:distance_vector}
\mathbf{e}^{(i)} := [\mathbf{e}_{1,i},...,\mathbf{e}_{k,i}] \in \mathbb{R}^{k}.
\end{equation}

This vector is mapped to a single, scalar value representing the anomalousness of node $i$, through a neural network: 

\begin{equation}\label{eq:aggr_network}
\mathbf{h}_{\mathcal{N}_i}^{(1)} := \mathcal{F}(\mathbf{e}^{(i)},\Theta),
\end{equation}
where $\Theta$ are the weights of the neural network $\mathcal{F}$.

\paragraph{Update}
Finally, the update function outputs this learned, aggregated message:

\begin{equation}
    \gamma^{(1)} := \mathbf{h}_{\mathcal{N}_i}^{(1)}.
\end{equation}
We use a loss function which trains the GNN to output a score of $0$ for normal nodes and $1$ for anomalous nodes. As all training points are of the normal class, the network would attain perfect training accuracy by outputting zero scores regardless of the input. To avoid this trivial solution, we generate negative samples to act as artificial anomalies, training the model to output a score of $1$ for the negative sample nodes. With this, we aim to learn a decision boundary between normal samples and negative samples which generalises to the true anomalies in the test set. In the next section, we detail how negative samples are generated.

\subsection{Negative Sampling}\label{sec:negative_sampling}

\begin{table}
    \centering
    \begin{tabular}{ccccc}
    \toprule
    Dataset & \#Size & \#Dim & \#Anomalies\\
    \midrule
    \texttt{HRSS} & $90515$ & $20$ & $10187$ \\
    \texttt{MI-F} & $24955$ & $58$ & $2050$ \\
    \texttt{MI-V} & $22905$ & $58$ & $3942$ \\
    \texttt{OPTDIGITS} & $5216$ & $64$ & $150$ \\
    \texttt{PENDIGITS} & $6870$ & $16$ & $156$ \\
    \texttt{SATELLITE} & $6435$ & $36$ & $399$ \\
    \texttt{SHUTTLE} &  $49097$ & $9$ & $3511$ \\
    \texttt{THYROID} & $7200$ & $21$ & $534$\\
    \bottomrule
    \end{tabular}
    \caption{Statistics of the datasets used in experiments.}
    \label{tab:datasets}
\end{table}

\begin{table*}[ht!]
    \centering
    \begin{tabular}{rcccccccccc}
    \toprule
     Dataset & \textsc{IForest} & \textsc{OC-SVM} & \textsc{LOF} & \textsc{KNN} & \textsc{AE} & \textsc{VAE} & \textsc{DAGMM} & \textsc{SO}-GAAL & \textsc{DN2} & \textsc{LUNAR}\\
     \midrule
     \texttt{HRSS} & 59.61 & 61.03 & 60.13 & 62.09 & 61.16 & 63.30 & 55.93 & 45.90 & 60.20 & \ \ \ \  \textbf{92.17}** \\
     \texttt{MI-F} & 84.24 & 78.65 & 63.07 & 78.08 & 71.53 & 78.63 & 81.45 & 32.07 & 77.26 & \textbf{84.37} \\
     \texttt{MI-V} & 84.28 & 74.56 & 79.14 & 82.71 & 82.42 & 75.96 & 78.19 & 55.34 & 62.54 & \ \ \ \ \textbf{96.73}** \\ 
     \texttt{OPTDIGITS} & 79.34 & 59.84 & 99.53 & 96.57 & 97.46 & 86.71 & 75.56 & 74.35 & 34.98 & \textbf{99.76} \\
     \texttt{PENDIGITS} & 96.70 & 94.08 & 98.18 & 98.42 & 96.42 & 94.76 & 95.98 & 94.65 & 85.30 & \ \ \ \ \textbf{99.81}** \\
     \texttt{SATELLITE} & 80.10 & 64.64 & 84.25 & \textbf{86.07} & 81.48 & 66.09 & 78.22 & 84.16 & 75.37 & 85.35 \\
     \texttt{SHUTTLE} & 99.64 & 98.29 & 99.80 & 99.56 & 99.26 & 98.33 & 99.51 & 99.38 & 96.97 & \ \ \ \ \textbf{99.97}** \\
     \texttt{THYROID} & 76.30 & 52.81 & 68.67 & 63.01 & 64.34 & 51.54 & 70.91 & 60.13 & 58.09 & \ \ \ \ \textbf{85.44}** \\
    \bottomrule
    \end{tabular}
    \caption{AUC Score for each method on each dataset. Best scores are highlighted in bold. Average scores marked by  ** are greater than the next best performing method with significance level $p<0.01$, according to the $t$-test. More significance test results are found in the supplementary material.}
    \label{tab:main_results}
\end{table*}

Negative samples have been used to introduce supervision to unsupervised tasks, such as in contrastive learning \cite{chen2020simple}, as well as anomaly detection \cite{sipple2020interpretable}. They need to be sufficiently distinguishable from normal samples for the model to learn the decision boundary, but not so dissimilar that the task is too easy and the learnt boundary fails to discriminate normal samples from real anomalies. With this in mind, we combine two methods of generating negative samples, which are as follows:

\paragraph{Uniform}
The first method involves generating negative samples from a uniform distribution:
\begin{equation}
\mathbf{x}^{\text{(negative)}} \sim \mathcal{U}(-\varepsilon, 1 + \varepsilon) \in \mathbb{R}^d,
\end{equation}
where $\varepsilon$ is a small, positive constant. For simplicity, we use $\epsilon = 0.1$ in all experiments. The training data is normalized to the range $[0,1]$, so these samples cover the data bounds. However, normal data occupies a much smaller subspace within these bounds, so many of these negative samples would be far from normal data and ineffective for learning the decision boundary. We complements this by generating an additional set of more `difficult` negative samples.

\paragraph{Subspace Perturbation}

In the second method, we generate negative samples by adding Gaussian noise to normal samples in a subset of their feature dimensions:

\begin{align}
\mathbf{z} &\sim \mathcal{N}(\mathbf{0},I) \in \mathbb{R}^{d}, \nonumber \\
\mathbf{x}^{\text{(negative)}}_{i} &= \mathbf{x}^{\text{(train)}}_{i} + \mathbf{M} \circ \varepsilon \mathbf{z}.
\end{align}
where $\varepsilon$ is a small, positive constant and $\mathbf{M} \in \mathbb{R}^{d}$ is a vector of binary random variables. Each element in $\mathbf{M}$ has probability $p$ of being one (and $1-p$ of being zero), which determines the feature dimensions to be perturbed. We use $p=0.3$ in all experiments.

\paragraph{Computational Runtime}

In the supplementary material, we show the runtimes of \textsc{LUNAR} versus other methods in experiments. We see that \textsc{LUNAR} is faster than the other deep methods tested (e.g. $33.71$ seconds for \textsc{LUNAR} versus $55.92$ seconds for \textsc{DAGMM} on the \texttt{HRSS} dataset). \textsc{LUNAR} avoids directly training on high-dimensional feature data in its input, instead using distances between points, which explains the faster training time.

\paragraph{Limitations}

A limitation of \textsc{LUNAR}, as with all local outlier methods, is in finding the $k$ nearest neighbours. This is mostly an issue in very high-dimensional spaces, such as with image data, where distance measures become less meaningful \cite{Beyer1999}. Adapting \textsc{LUNAR} for higher dimensionality is left for future work at present.

\paragraph{Theoretical Properties}

An additional benefit of our unified approach is that we can use it to characterize theoretical properties of almost all local outlier methods in our framework (including \textsc{LUNAR}, \textsc{KNN}, \textsc{LOF}, and \textsc{DBSCAN}) in a unified way. One simple but important property of algorithms is their \emph{symmetries under transformations}, which are very relevant to understanding their \emph{inductive biases}, or the assumptions they use to generalize to unseen data. 

Let $s(\mathbf{x}; \{\mathbf{x}_i^{(\textrm{train})}\}_{i=1}^m)$ be the anomaly score of any local outlier method evaluated at $\mathbf{x}$ given training data $\{\mathbf{x}_i^{(\textrm{train})}\}_{i=1}^m$. 

\begin{prop}[Transformation Equivariance]
Given any distance-preserving transformation $f$, the score $s$ is \emph{transformation equivariant}; that is,
\begin{align}
    s(\mathbf{x}; \{\mathbf{x}_i^{(\textup{train})}\}_{i=1}^m) = s(f(\mathbf{x}); \{f(\mathbf{x}_i^{(\textup{train})})\}_{i=1}^m)
\end{align}
\end{prop}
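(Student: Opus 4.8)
The plan is to exploit the structural fact, established by the unifying framework of Section \ref{sec:framework}, that every method in our family computes its score \emph{solely} through pairwise distances. Concretely, for \textsc{KNN}, \textsc{LOF}, \textsc{DBSCAN} and \textsc{LUNAR} alike, the edge features $\mathbf{e}_{j,i}$ are either the raw distance $\mathsf{dist}(\mathbf{x}_i,\mathbf{x}_j)$ or the reachability distance $\mathsf{reach}_k(\mathbf{x}_i,\mathbf{x}_j)$ of (\ref{eq:reachability_distance}), which is itself a maximum of distances; the neighbourhood $\mathcal{N}_i$ is the set of $k$ indices minimising $\mathsf{dist}(\mathbf{x}_i,\cdot)$; and the message, aggregation and update functions $\phi,\square,\gamma$ of (\ref{eq:GNN}) take as input only these distance-derived edge features and previous hidden states, never the raw coordinates $\mathbf{x}_i$ themselves (see Table \ref{tab:LOF}, and note that \textsc{LUNAR}'s aggregation network $\mathcal{F}$ of (\ref{eq:aggr_network}) is applied to the distance vector $\mathbf{e}^{(i)}$ of (\ref{eq:distance_vector})). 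The whole score is therefore a deterministic function of the pairwise-distance data alone.

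The first step is to record the defining property of a distance-preserving $f$: for all points $\mathbf{u},\mathbf{v}$ in the relevant set, $\mathsf{dist}(f(\mathbf{u}),f(\mathbf{v})) = \mathsf{dist}(\mathbf{u},\mathbf{v})$. From this I would derive invariance of each building block in turn. Since all pairwise distances among $\{\mathbf{x}\}\cup\{\mathbf{x}_i^{(\textup{train})}\}$ are unchanged, the ordering of the training points by distance to $\mathbf{x}$ (and to each other) is unchanged, so the neighbourhood sets $\mathcal{N}_i$ are identical index-for-index; the same holds for any distance-based tie-breaking rule, as it sees identical distance values. Consequently every edge feature $\mathbf{e}_{j,i}$ is preserved, including $\mathsf{k\text{-}dist}$ and hence $\mathsf{reach}_k$.

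I would then finish by a short induction over the message-passing layers $k$. The base case is that the layer-one hidden states depend only on the (preserved) edge features aggregated over the (preserved) neighbourhoods, so they coincide for the original and transformed inputs; for \textsc{LUNAR} this is exactly the statement that $\mathcal{F}(\mathbf{e}^{(i)},\Theta)$ is unchanged because its argument $\mathbf{e}^{(i)}$ is unchanged while the learned weights $\Theta$ are fixed. For the inductive step, if the hidden states $\mathbf{h}^{(k-1)}$ agree, then applying the same deterministic maps $\phi^{(k)},\square^{(k)},\gamma^{(k)}$ to the same neighbourhoods, edge features and previous states yields identical $\mathbf{h}^{(k)}$. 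The score $s$, being the final-layer output, is thus identical, proving the claimed equality.

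The main obstacle is not any single calculation but the need to argue the reduction cleanly and in full generality: I must verify that \emph{no} method in the family secretly depends on the ambient coordinates rather than on distances, in particular that \textsc{LUNAR}'s neighbour-indexed input vector $\mathbf{e}^{(i)}$ is assembled in a distance-determined order so that it, too, is genuinely coordinate-free, and I must handle the tie-breaking in the definition of $\mathcal{N}_i$ carefully, since a rule that depended on raw coordinates (rather than on distances or on the fixed index set) could in principle break the equivariance.
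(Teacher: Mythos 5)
Your proposal is correct and follows essentially the same route as the paper's own (much terser) proof: both reduce the claim to the observation that every method in the framework consumes only pairwise distances (or reachability distances, themselves functions of distances) through the message-passing pipeline, never the raw coordinates, and that a distance-preserving $f$ leaves all of these inputs unchanged. Your added care about neighbourhood tie-breaking, the ordering of \textsc{LUNAR}'s input vector $\mathbf{e}^{(i)}$, and the layer-by-layer induction merely makes explicit details the paper's two-sentence proof leaves implicit.
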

For example, $s$ is equivariant to rotations, translations and reflections. 
\begin{proof}
As shown in Table \ref{tab:LOF}, all these methods compute distances $\mathsf{dist}(\mathbf{x}_i,\mathbf{x}_j)$ or $\mathsf{reach}(\mathbf{x}_i,\mathbf{x}_j)$ as input, and do not use the input features $\mathbf{x}_i$ in any other way. Applying $f$ to the training and test data does not change the (reachability) distances between them, thus also preserving the score $s$.
\end{proof}

\section{Experiments}

We now conduct experiments with real datasets to answer the following research questions:

\noindent \textbf{RQ1 (Accuracy):} Does \textsc{LUNAR} outperform existing baselines in detecting true anomalies? \\
\textbf{RQ2 (Robustness):} Is \textsc{LUNAR} more robust to changes in the neighbourhood size, $k$, than existing local outlier methods?\\
\textbf{RQ3 (Ablation Study):} How do variations in our methodology affected its performance?\\

\subsection{Datasets}

Each dataset used in our experiments is publicly available and consists of a normal ($0$) class and anomaly ($1$) class. Table \ref{tab:datasets} summarises them and their key statistics.

As we focus on the unsupervised case, in which the training set only consists of samples labelled as normal (all labelled anomalies are in the test set). We use Area-Under-Curve (AUC) to measure performance. The relative proportion of anomalies in the test set does not affect the scoring of any individual point, so we can choose to randomly subsample normal points to achieve a 50:50 normal:anomaly ratio in the test set. Of the remaining normal samples, they are split 85:15 into a training set and validation set. We randomly generate both `Uniform' and `Subspace Perturbation' negative samples for the training and validation sets separately to avoid leaking information. We use a 1:1 ratio of negative:normal samples in both sets for all experiments.

\subsection{Training Procedure}

\begin{table*}[ht]
    \centering
    \begin{tabular}{r|cccc|cccc|cccc}
    \toprule
    \multicolumn{1}{r|}{$k$} & \textsc{LOF} & \textsc{KNN} & \textsc{DN2} & \textsc{LUNAR} & \textsc{LOF} & \textsc{KNN} & \textsc{DN2} & \textsc{LUNAR} & \textsc{LOF} & \textsc{KNN} & \textsc{DN2} & \textsc{LUNAR} \\ 
    \midrule
    \multicolumn{1}{c}{} & \multicolumn{4}{c}{\texttt{HRSS}} & \multicolumn{4}{c}{\texttt{MI-F}} & \multicolumn{4}{c}{\texttt{MI-V}} \\
    \midrule
    2   & 82.08 & 86.25 & 85.28 & \textbf{93.88} & 90.43 & 77.84 & \textbf{91.13} & 81.50 & 94.31 & 94.58 & 86.76 & \textbf{96.06} \\
    10  & 67.98 & 65.53 & 62.40 & \textbf{92.67} & \textbf{86.41} & 73.46 & 85.58 & 82.39 & 92.60 & 88.53 & 77.92 & \textbf{96.09} \\
    50  & 61.66 & 62.71 & 60.46 & \textbf{92.21} & 67.17 & 71.69 & 78.66 & \textbf{83.58} & 78.61 & 83.29 & 64.96 & \textbf{96.38} \\
    100 & 60.13 & 62.09 & 60.20 & \textbf{92.17} & 63.07 & 78.08 & 77.26 & \textbf{84.37} & 79.14 & 82.71 & 62.54 & \textbf{96.73}  \\
    150 & 57.22 & 61.81 & 60.14 & \textbf{91.61} & 60.60 & 80.79 & 76.33 & \textbf{82.82} & 80.73 & 82.86 & 61.77 & \textbf{96.53} \\
    200 & 55.59 & 61.86 & 60.22 & \textbf{90.09} & 70.89 & 82.85 & 75.93 & \textbf{84.47} & 81.75 & 82.65 & 61.67 & \textbf{96.30} \\
    \midrule
    \textbf{Avg.} & 64.11 & 67.10 & 64.79 & \textbf{92.11} & 73.10 & 77.45 & 80.82 & \textbf{83.19} & 84.52 & 85.77 & 69.27 & \textbf{96.35} \\
    \midrule
    \multicolumn{1}{c}{} & \multicolumn{4}{c}{\texttt{OPTDIGITS}} & \multicolumn{4}{c}{\texttt{PENDIGITS}} & \multicolumn{4}{c}{\texttt{SATELLITE}} \\
    \midrule
    2 & 99.58 & \textbf{99.91} & 50.90 & \textbf{99.91} & 99.37 & \textbf{99.84} & 81.08 & \textbf{99.84} & 85.05 & 87.72 & 80.16 & \textbf{87.80}\\
    10 & \textbf{99.92} & 99.63 & 45.84 & 99.79 & 99.67 & 99.77 & 80.74 & \textbf{99.82} & 85.38 & 86.77 & 79.43 & \textbf{87.83}\\
    50 & 99.72 & 98.41 & 39.23 & \textbf{99.81} & 98.79 & 98.79 & 81.83 & \textbf{99.80} & 83.44 & 86.07 & 76.52 & \textbf{87.58}\\
    100 & 99.53 & 96.57 & 34.98 & \textbf{99.76} & 98.18 & 98.42 & 85.30 & \textbf{99.81} & 84.25 & \textbf{86.07} & 75.37 & 85.35 \\ 
    150 & 99.11 & 94.85 & 33.10 & \textbf{99.73} & 97.58 & 98.07 & 86.39 & \textbf{99.76} & 84.86 & \textbf{85.85} & 74.48 & 83.95 \\ 
    200 & 98.63 & 93.13 & 32.14 & \textbf{99.78} & 97.19 & 97.52 & 85.49 & \textbf{99.71} & 85.21 & \textbf{85.46} & 73.39 & 84.70  \\
    \midrule
    \textbf{Avg.} & 99.41 & 97.09 & 39.37 & \textbf{99.79} & 98.46 & 98.74 & 83.47 & \textbf{99.79} & 84.69 & \textbf{86.32} & 76.55 & 86.08 \\
    \midrule
    \multicolumn{1}{c}{} & & \multicolumn{1}{r|}{$k$} & \textsc{LOF} & \multicolumn{1}{c}{\textsc{KNN}} & \textsc{DN2} & \multicolumn{1}{c|}{\textsc{LUNAR}} & \textsc{LOF} & \multicolumn{1}{c}{\textsc{KNN}} & \multicolumn{1}{c}{\textsc{DN2}} & \multicolumn{1}{c}{\textsc{LUNAR}} & \\
    \cmidrule{3-11}
    \multicolumn{3}{c}{} & \multicolumn{4}{c}{\texttt{SHUTTLE}}  & \multicolumn{4}{c}{\texttt{THYROID}} & \\
    \cmidrule{3-11}
    \multicolumn{1}{c}{} & & \multicolumn{1}{r|}{2} & 99.64 & \multicolumn{1}{c}{\textbf{99.98}} & 98.94 & \multicolumn{1}{c|}{\textbf{99.98}} & \textbf{83.70} & \multicolumn{1}{c}{80.28} & \multicolumn{1}{c}{64.09} & \multicolumn{1}{c}{83.38} & \\
    \multicolumn{1}{c}{} & & \multicolumn{1}{r|}{10} & 99.91 & \multicolumn{1}{c}{99.93} & 98.22 & \multicolumn{1}{c|}{\textbf{99.95}} & 83.69 & \multicolumn{1}{c}{73.87} & \multicolumn{1}{c}{62.71} & \multicolumn{1}{c}{\textbf{84.24}} & \\
    \multicolumn{1}{c}{} & & \multicolumn{1}{r|}{50} & 99.74 & \multicolumn{1}{c}{99.68} & 97.19 & \multicolumn{1}{c|}{\textbf{99.97}} & 74.41 & \multicolumn{1}{c}{66.49} & \multicolumn{1}{c}{59.88} & \multicolumn{1}{c}{\textbf{86.01}} & \\
    \multicolumn{1}{c}{} & & \multicolumn{1}{r|}{100} & 99.80 & \multicolumn{1}{c}{99.56} & 96.97 & \multicolumn{1}{c|}{\textbf{99.97}} & 68.67 & \multicolumn{1}{c}{63.01} & \multicolumn{1}{c}{58.09} & \multicolumn{1}{c}{\textbf{85.44}} & \\ 
    \multicolumn{1}{c}{} & & \multicolumn{1}{r|}{150} & 99.80 & \multicolumn{1}{c}{99.43} & 96.68 & \multicolumn{1}{c|}{\textbf{99.95}} & 67.20 & \multicolumn{1}{c}{62.26} & \multicolumn{1}{c}{56.86} & \multicolumn{1}{c}{\textbf{86.08}} & \\ 
    \multicolumn{1}{c}{} & & \multicolumn{1}{r|}{200} & 99.69 & \multicolumn{1}{c}{99.32} & 96.45 & \multicolumn{1}{c|}{\textbf{99.97}} & 66.58 & \multicolumn{1}{c}{61.24} & \multicolumn{1}{c}{56.26} & \multicolumn{1}{c}{\textbf{86.67}} & \\
    \cmidrule{3-11}
    \multicolumn{1}{c}{} & & \multicolumn{1}{r|}{\textbf{Avg.}} & 99.76 & \multicolumn{1}{c}{99.65} & 97.41 & \multicolumn{1}{c|}{\textbf{99.96}} & 74.04 & \multicolumn{1}{c}{67.86} & \multicolumn{1}{c}{59.65} & \multicolumn{1}{c}{\textbf{85.31}} & \\
    \cmidrule{3-11}
    \end{tabular}
    \caption{AUC Score of \textsc{LOF}, \textsc{KNN}, \textsc{DN2} and \textsc{LUNAR} for different values of $k$ and the Avg. over all $k$. Best performance for each is highlighted in bold.}
    \label{tab:k_results}
\end{table*}

The neural network, $\mathcal{F}$ in (\ref{eq:aggr_network}), consists of four fully connected hidden layers all of size $256$. All layers used $\mathsf{tanh}$ activation except for the $\mathsf{sigmoid}$ function at the output layer. We used mean squared error as the loss function and Adam \cite{kingma2014adam} for optimization with a learning rate of $0.001$ and weight decay of $0.1$.

We trained the model for $200$ epochs and used the model parameters with the best validation score as the final model. It was implemented using PyTorch Geometric on Windows OS and a Nvidia GeForce RTX 2080 Ti GPU.

\subsection{Baselines}

We use the PyOD library \cite{zhao2019pyod} implementations of \textsc{IForest}, \textsc{OC-SVM}, \textsc{LOF}, \textsc{KNN}, and the GAN-based \textsc{SO-GAAL} \cite{liu2019generative}. We also implement a deep autoencoder (\textsc{AE}) and \textsc{VAE} built in Pytorch, and \textsc{DAGMM} as in \cite{Bo2018} with publicly available codes. Finally, \textsc{DN2} \cite{bergman2020deep}, which performs \textsc{KNN} with latent features learnt from a deep, pre-trained feature extractor. As we are interested in tabular data rather than image data, unlike the original paper, we use an autoencoder (the same model as in \textsc{AE}) for feature extraction.

\subsection{RQ1 (Accuracy):}

Table \ref{tab:main_results} shows the AUC score (multiplied by 100) of each method for each dataset. We use AUC as it does not rely on a user-defined score threshold to predict normal or anomalous labels. The scores shown are the average over five repeated trials with different random seeds. For the methods that use it, all results are with $k=100$ as the number of neighbours unless stated otherwise.

We see that \textsc{LUNAR} gives the best performance on all datasets except \texttt{SATELLITE}, for which \textsc{KNN} is slightly better. For the \texttt{HRSS}, \texttt{MI-V} and \texttt{THYROID} datasets in particular, our method performs substantially better than the baselines: between $10$ and $30$ percentage points better than the second best method. Our scores marked by ** are significantly better than the second best performing method for each dataset with significance value $p<0.01$ according to the $t$-test.
 
\subsection{RQ2 (Robustness to Neighbourhood Size):}

\textsc{LOF}, \textsc{KNN} and \textsc{DN2} also use the $k$ nearest neighbours of a point to determine its anomalousness. In Table \ref{tab:k_results}, we show the performance of these methods for various $k$. We see that these methods depend greatly on the value of $k$. For example, their score decreases by $26$, $24$ and $25$ percentage points respectively for \texttt{HRSS} as $k$ increases from $2$ to $200$. In stark contrast, \textsc{LUNAR} only drops in performance by $3$ points in the same range. \text{LUNAR} gives the best performance in the vast majority of datasets and $k$ settings. Our method not only performs better, but maintains stronger performance for different settings of $k$. This is because it is able to learn to use the information from all $k$ neighbours effectively, whereas the other methods lose information from most neighbours, as decided by a pre-set aggregation rule.

\subsection{RQ3 (Ablation Study):}

\begin{table}[h!]
    \centering
    \begin{tabular}{rccc}
    \toprule
    & \multicolumn{3}{c}{Negative sampling scheme} \\
    Dataset & \textsc{SP} & \textsc{U} & Mixed \\
    \midrule
    \texttt{HRSS} & \textbf{93.32} & 66.34 & 92.17  \\
    \texttt{MI-F} & 84.17 & 57.76 & \textbf{84.37} \\
    \texttt{MI-V} & 96.64 & 67.99 & \textbf{96.73} \\
    \texttt{OPTDIGITS} & 93.81 & \textbf{99.86} & 99.76 \\
    \texttt{PENDIGITS} & 99.78 & \textbf{99.82} & 99.81 \\
    \texttt{SATELLITE} & \textbf{85.37} & 85.12 & 85.35 \\
    \texttt{SHUTTLE} & 99.96 & 99.54 & \textbf{99.97} \\
    \texttt{THYROID} & \textbf{85.99} & 45.42 & 85.44 \\
    \bottomrule
    \end{tabular}
    \caption{AUC scores for different negative sample types.}
    \label{tab:neg_1_results}
\end{table}

Table \ref{tab:neg_1_results} shows the performance with Subspace Perturbation (\textsc{SP}) and Uniform (\textsc{U}) negative samples individually. \textsc{SP} negative samples give better performance than \textsc{U} samples except for the \texttt{OPTDIGITS} dataset, in which \text{SP} samples alone gives poor performance for small values of $k$. Overall, mixing both types gives the best performance in most cases.

Further ablation studies relating to the neural network size and depth, can be found in the supplementary material. Overall, we find that deeper and wider networks for message aggregation give the best performance.

\section{Conclusion}

We have studied local outlier methods, some of the most well-established and popular anomaly detection methods in practice, which use the distance of data samples to their nearest neighbours to detect anomalies. We provided a unifying framework which shows that many local outlier methods seen as particular cases of the message passing scheme used in graph neural networks.

We then proposed \textsc{LUNAR}, which is based on this shared framework but is also able to learn and adapt to different sets of data by using a graph neural network. We show that our method significantly outperforms the baselines, including other deep learning-based methods, on a wide variety of datasets. Our method also maintains its strong performance for different neighbourhood sizes much better than other local outlier methods, as it is unique in its ability to learn from all incoming information from the neighbours.

\section*{Acknowledgements}

This work was supported in part by NUS ODPRT Grant R252-000-A81-133.

\bibliography{ref}

\end{document}


\onecolumn

\maketitle

\section{Local Outlier Methods}

\begin{table}[h!]
\centering
\begin{tabular}{p{2.5cm}llllllp{2cm}}
\toprule
Method & $\mathbf{e}_{j,i}$ & $\phi^{(1)}$ & $\square^{(1)}$ & $\gamma^{(1)}$ & $\phi^{(2)}$ & $\square^{(2)}$ & $\gamma^{(2)}$ \\
\midrule
\textsc{\textbf{KNN}} (Anguilli,2002) & $\mathsf{dist}(i,j)$ & $\mathbf{e}_{j,i}$ & $\mathsf{max}$ & $\mathbf{h}_{N_i}$ & - & - & -\\

\textsc{\textbf{Aggr-KNN}} (Anguilli,2002)& $\mathsf{dist}(i,j)$ & $\mathbf{e}_{j,i}$ & $\mathsf{sum}$ & $\mathbf{h}_{N_i}$ & - & - & - \\

\textsc{\textbf{LOF}} (Breunig,2000) & $\mathsf{r}\text{-}\mathsf{dist}(i,j)$* & $\mathbf{e}_{j,i}$ & $\mathsf{sum}$ & $\mathbf{h}_{N_i}^{-1}$ & $\frac{\mathbf{h}_{j}}{\mathbf{h}_{i}}$ & $\mathsf{mean}$ & $\mathbf{h}_{N_i}$ \\

\textsc{\textbf{Simple-LOF}} (Schubert,2014) & $\mathsf{dist}(i,j)$ & $\mathbf{e}_{j,i}$ & $\mathsf{sum}$ & $\mathbf{h}_{N_i}^{-1}$ & $\frac{\mathbf{h}_{j}}{\mathbf{h}_{i}}$ & $\mathsf{mean}$ & $\mathbf{h}_{N_i}$ \\

\textsc{\textbf{LoOP}} (Kriegel, 2009a) $\mathsf{dist}(i,j)$ & $\mathbf{e}_{j,i}^2$ & $\mathsf{mean}$ & $\lambda \mathbf{h}_{N_i}$ & $\frac{\mathbf{h}_{j}}{\mathbf{h}_{i}} -1$ & $\mathsf{sum}$ & $\mathsf{max}\{0,\text{erf}(\frac{1}{\sqrt{2}}\mathbf{x}_{N_i})\}$ \\

\textsc{\textbf{INFLO}} (Jin,2006) & $\mathsf{dist}(i,j)$ & $\mathbf{e}_{j,i}^2$ & $k^{th}$-$\mathsf{max}$ & $\mathbf{h}_{N_i}^{-1}$ & $\frac{\mathbf{h}_{j}}{\mathbf{h}_{i}}$ & $\mathsf{mean}$ & $\mathbf{h}_{N_i}$ \\

\textsc{\textbf{DBSCAN}} (Ester,1996) & $\mathsf{dist}(i,j)$ & $H(\epsilon - \mathbf{e}_{j,i})$ & $\mathsf{sum}$ & $H(\mathbf{h}_{N_i} -$ minPts) & $\mathbf{h}_j$ & $\mathsf{max}$ & $1 - \mathbf{h}_{N_i}$ \\

\textsc{\textbf{ROS}} (Pei,2006) & $1$ & $\mathbf{h}_j - \mathbf{h}_i$ & $\mathsf{sum}$ & $1 - \mathsf{min}_{1\leq r \leq n}\mathbf{h}_{N_j}$ & - & - & - \\

\textsc{\textbf{SOD}} (Kriegel,2009b) & $1$ & $\mathbf{h}_j$ & $\mathsf{mean}$ & $\mathbf{h}_i -\mathbf{h}_{N_i}$ & $\mathbf{h}_j$ & $\mathsf{mean}$ &
$\frac{\mathbf{h}_{N_i} \cdot v^{R(i)}_c}{\Vert v^{R(i)}_c \Vert_1}$*** \\

\bottomrule
\end{tabular}
* $\mathsf{r}\text{-}\mathsf{dist}(i,j)$ is the reachability distance. \\
** erf is the Gauss error function \\ 
*** $v^{R(i)}_c = 1 \ \text{for} \ c = 1,..,d.$ \\
\end{table}

\onecolumn

\section{Experiments}

\subsection{Datasets}

\begin{table}[h!]
    \centering
    \begin{tabular}{rl}
    \toprule
    Dataset & Link \\
    \midrule
    \texttt{HRSS} & \text{https://www.kaggle.com/init-owl/high-storage-system-data-for-energy-optimization} \\
    \texttt{MI-F} & \text{https://www.kaggle.com/shasun/tool-wear-detection-in-cnc-mill} \\
    \texttt{MI-V} & \text{https://www.kaggle.com/shasun/tool-wear-detection-in-cnc-mill} \\
    \texttt{OPTDIGITS} & \text{http://odds.cs.stonybrook.edu (Rayana2016)} \\
    \texttt{PENDIGITS} & \text{http://odds.cs.stonybrook.edu (Rayana2016)}\\
    \texttt{SATELLITE} & \text{http://odds.cs.stonybrook.edu (Rayana2016)} \\
    \texttt{SHUTTLE} & \text{http://odds.cs.stonybrook.edu (Rayana2016)} \\
    \texttt{THYROID} & \text{http://odds.cs.stonybrook.edu (Rayana2016)} \\
    \bottomrule
    \end{tabular}
\end{table}

\subsection{Statistical Significance}

Table \ref{all_k} shows the $p$ values for which the performance of \textsc{LUNAR} is statistically significant over the local outlier methods (\textsc{LOF}, \textsc{KNN} and \textsc{DN2}) according to the one-sided Wilcoxon test, where all of the trials over all tested values of $k$ are considered. 

Table \ref{100k} shows the $p$ values for the which the performance of \textsc{LUNAR} is statistically significant over the second best performing baseline for each dataset respectively (with $k=100$ for local outlier methods specifically as in the main experiments). We perform the one-sided Wilcoxon test as well as T test. We see that overall, \textsc{LUNAR} significantly performs better than the other methods ($p$ < 0.01) for most datasets. The lowest $p$-value possible from Wilcoxon tests is limited by the number of trials, which explains so many values are at this lower limit. 

\begin{table*}[h!]
\centering
\begin{tabular}{rlll}
\toprule
Dataset & \textsc{LOF} & \textsc{KNN} & \textsc{DN2} \\
\midrule
\texttt{HRSS} & 0.00000012** & 0.00000012** & 0.00000012** \\
\texttt{MI-F} & 0.00037028** & 0.00000012** & 0.03093528* \\
\texttt{MI-V} & 0.00000012** & 0.00000012** & 0.00000012** \\ 
\texttt{OPTDIGITS} & 0.00020157** & 0.00000053** & 0.00000012** \\
\texttt{PENDIGITS} & 0.00000029** & 0.00000085** & 0.00000012** \\
\texttt{SATELLITE} & 0.00034885** & 0.64070398 & 0.00000094** \\
\texttt{SHUTTLE} & 0.00000012** & 0.00000014** & 0.00000012** \\
\texttt{THYROID} & 0.00000086** & 0.00000012** & 0.00000012** \\
\bottomrule
\end{tabular}
\caption{$p$-values of the significance of the improvement of \textsc{LUNAR} over the selected local outlier methods, calculated with the Wilcoxon one-sided test, over all values of $k$ tested in experiments. $p < 0.01$ is marked by ** and $p<0.05$ is marked by *.}
\label{all_k}
\end{table*}
    
\begin{table*}[h!]
\centering
\begin{tabular}{rll}
\toprule
 Dataset & Wilcoxon Test & T-test\\
 \midrule
 \texttt{HRSS} & 0.02155722* & 0.00000000** \\
 \texttt{MI-F} & 0.44636920 & 0.84219156 \\
 \texttt{MI-V} & 0.02155722* & 0.00000086** \\ 
 \texttt{OPTDIGITS} & 0.02155722* & 0.25849456 \\
 \texttt{PENDIGITS} & 0.02155722* & 0.00947708** \\
 \texttt{SATELLITE} & 0.97844278 & 0.33650830 \\
 \texttt{SHUTTLE} & 0.02155722* & 0.00002765** \\
 \texttt{THYROID} & 0.02155722* & 0.00004539**\\
\bottomrule
\end{tabular}
\caption{$p$-values of the significance of the improvement of \textsc{LUNAR} over the second best-performing baseline method, calculated with the Wilcoxon one-sided test and T-test, for $k=100$. $p < 0.01$ is marked by ** and $p<0.05$ is marked by *.}
\label{100k}
\end{table*}

\subsection{Standard Deviations}

Tables \ref{tab:main_stdev} and \ref{tab:k_stdev} show the standard deviations of the scores associated with each model and dataset, over the five trials performed for each.

\begin{table*}[ht!]
    \centering
    \begin{tabular}{rcccccccccc}
    \toprule
     Dataset & \textsc{IForest} & \textsc{OC-SVM} & \textsc{LOF} & \textsc{KNN} & \textsc{AE} & \textsc{VAE} & \textsc{DAGMM} & \textsc{SO-GAAL} & \textsc{DN2} & \textsc{LUNAR}\\
     \midrule
     \texttt{HRSS}      & 0.54 & 0.20 & 0.20 & 0.19 & 1.47 & 0.6 & 0.28 & 4.58 & 4.81 & 0.26 \\
     \texttt{MI-F}      & 1.22 & 0.37 & 2.58 & 0.50 & 9.06 & 0.6 & 0.66 & 1.85 & 3.86 & 0.44 \\
     \texttt{MI-V}      & 1.83 & 0.82 & 0.83 & 0.67 & 3.22 & 2.11 & 0.55 & 14.62 & 1.26 & 0.20 \\ 
     \texttt{OPTDIGITS} & 2.80 & 2.59 & 0.29 & 0.98 & 1.65 & 2.22 & 2.76 & 8.25 & 8.90 & 0.25 \\
     \texttt{PENDIGITS} & 1.13 & 1.49 & 0.41 & 0.78 & 2.29 & 2.31 & 1.12 & 2.17 & 8.44 & 0.25 \\
     \texttt{SATELLITE} & 2.22 & 1.53 & 0.88 & 0.75 & 2.25 & 3.36 & 1.08 & 1.30 & 9.33 & 1.18 \\
     \texttt{SHUTTLE}   & 0.05 & 0.01 & 0.03 & 0.01 & 0.48 & 0.02 & 0.20 & 0.03 & 1.31 & 0.02 \\
     \texttt{THYROID}   & 1.75 & 0.97 & 1.29 & 0.91 & 3.46 & 1.18 & 2.52 & 5.00 & 1.92 & 1.49 \\
    \bottomrule
    \end{tabular}
    \caption{Standard deviations of AUC scores for each method on each dataset.}
    \label{tab:main_stdev}
\end{table*}

\begin{table*}[ht!]
    \centering
    \begin{tabular}{r|cccc|cccc|cccc}
    \toprule
    \multicolumn{1}{r|}{$k$} & \textsc{LOF} & \textsc{KNN} & \textsc{DN2} & \textsc{LUNAR} & \textsc{LOF} & \textsc{KNN} & \textsc{DN2} & \textsc{LUNAR} & \textsc{LOF} & \textsc{KNN} & \textsc{DN2} & \textsc{LUNAR} \\
    \midrule
    \multicolumn{1}{c}{} & \multicolumn{4}{c}{\texttt{HRSS}} & \multicolumn{4}{c}{\texttt{MI-F}} & \multicolumn{4}{c}{\texttt{MI-V}} \\
    \midrule
    2   & 0.15 & 0.26 & 1.11 & 0.27 & 0.21 & 0.28 & 2.87 & 0.31 & 0.44 & 0.34 & 3.73 & 0.21 \\
    10  & 0.54 & 0.14 & 4.91 & 0.29 & 0.94 & 0.63 & 3.71 & 0.20 & 0.78 & 0.81 & 2.53 & 0.24 \\
    30  & 0.35 & 0.18 & 4.93 & 0.21 & 3.46 & 1.17 & 4.25 & 0.67 & 1.53 & 0.85 & 0.94 & 0.21 \\ 
    50  & 0.24 & 0.16 & 4.89 & 0.23 & 1.18 & 0.84 & 4.69 & 0.60 & 1.53 & 0.73 & 1.31 & 0.21 \\
    100 & 0.20 & 0.19 & 4.81 & 0.26 & 2.58 & 0.50 & 3.86 & 0.44 & 0.83 & 0.67 & 1.26 & 0.20  \\
    150 & 0.19 & 0.23 & 4.85 & 0.36 & 1.38 & 0.66 & 3.59 & 0.35 & 1.25 & 0.49 & 1.31 & 0.18 \\
    200 & 0.18 & 0.25 & 4.93 & 0.95 & 0.79 & 0.49 & 3.54 & 0.66 & 0.47 & 0.63 & 1.35 & 0.20 \\
    \midrule
    \multicolumn{1}{c}{} & \multicolumn{4}{c}{\texttt{OPTDIGITS}} & \multicolumn{4}{c}{\texttt{PENDIGITS}} & \multicolumn{4}{c}{\texttt{SATELLITE}} \\
    \midrule
    2 & 0.23 & 0.17 & 13.94 & 0.18 & 0.42 & 0.26 & 12.83 & 0.24 & 1.19 & 0.85 & 3.88 & 0.77\\
    10 & 0.12 & 0.21 & 12.65 & 0.18 & 0.21 & 0.25 & 13.62 & 0.25 & 1.28 & 0.87 & 6.78 & 4.08\\
    30 & 0.21 & 0.26 & 11.52 & 0.24 & 0.30 & 0.58 & 13.74 & 0.24 & 1.01 & 0.81 & 8.84 & 1.32\\ 
    50 & 0.23 & 0.44 & 10.35 & 0.18 & 0.38 & 0.67 & 12.97 & 0.25 & 0.94 & 0.82 & 9.22 & 1.50\\
    100 & 0.29 & 0.98 & 8.90 & 0.25 & 0.41 & 0.78 & 8.44 & 0.25 & 0.88 & 0.75 & 9.33 & 1.18 \\ 
    150 & 0.39 & 1.43 & 9.27 & 0.20 & 0.57 & 0.90 & 4.70 & 0.25 & 0.81 & 0.72 & 9.00 & 1.45 \\ 
    200 & 0.43 & 1.66 & 9.22 & 0.21 & 0.72 & 1.00 & 4.47 & 0.26 & 0.84 & 0.75 & 8.74 & 1.56  \\
    \midrule
    \multicolumn{1}{c}{} & & \multicolumn{1}{r|}{k} & \textsc{LOF} & \multicolumn{1}{c}{\textsc{KNN}} & \textsc{DN2} & \multicolumn{1}{c|}{\textsc{LUNAR}} & \textsc{LOF} & \multicolumn{1}{c}{\textsc{KNN}} & \multicolumn{1}{c}{\textsc{DN2}} & \multicolumn{1}{c}{\textsc{LUNAR}} & \\
    \cmidrule{3-11}
    \multicolumn{3}{c}{} & \multicolumn{4}{c}{\texttt{SHUTTLE}}  & \multicolumn{4}{c}{\texttt{THYROID}} & \\
    \cmidrule{3-11}
    \multicolumn{1}{c}{} & & \multicolumn{1}{r|}{2} & 0.02 & \multicolumn{1}{c}{0.01} & 1.57 & \multicolumn{1}{c|}{0.01} & 2.18 & \multicolumn{1}{c}{1.33} & \multicolumn{1}{c}{2.81} & \multicolumn{1}{c}{1.43} & \\
    \multicolumn{1}{c}{} & & \multicolumn{1}{r|}{10} & 0.01 & \multicolumn{1}{c}{0.01} & 1.69 & \multicolumn{1}{c|}{0.01} & 1.69 & \multicolumn{1}{c}{1.41} & \multicolumn{1}{c}{1.63} & \multicolumn{1}{c}{2.34} & \\
    \multicolumn{1}{c}{} & & \multicolumn{1}{r|}{30} & 0.04 & \multicolumn{1}{c}{0.01} & 1.57 & \multicolumn{1}{c|}{0.02} & 1.51 & \multicolumn{1}{c}{1.08} & \multicolumn{1}{c}{1.14} & \multicolumn{1}{c}{3.08} & \\ 
    \multicolumn{1}{c}{} & & \multicolumn{1}{r|}{50} & 0.07 & \multicolumn{1}{c}{0.01} & 1.35 & \multicolumn{1}{c|}{0.01} & 1.53 & \multicolumn{1}{c}{1.04} & \multicolumn{1}{c}{1.38} & \multicolumn{1}{c}{1.31} & \\
    \multicolumn{1}{c}{} & & \multicolumn{1}{r|}{100} & 0.03 & \multicolumn{1}{c}{0.01} & 1.31 & \multicolumn{1}{c|}{0.02} & 1.29 & \multicolumn{1}{c}{0.91} & \multicolumn{1}{c}{1.92} & \multicolumn{1}{c}{1.49} & \\ 
    \multicolumn{1}{c}{} & & \multicolumn{1}{r|}{150} & 0.03 & \multicolumn{1}{c}{0.01} & 1.40 & \multicolumn{1}{c|}{0.03} & 0.89 & \multicolumn{1}{c}{1.03} & \multicolumn{1}{c}{1.98} & \multicolumn{1}{c}{0.87} & \\ 
    \multicolumn{1}{c}{} & & \multicolumn{1}{r|}{200} & 0.03 & \multicolumn{1}{c}{0.01} & 1.55 & \multicolumn{1}{c|}{0.02} & 0.75 & \multicolumn{1}{c}{1.02} & \multicolumn{1}{c}{2.09} & \multicolumn{1}{c}{1.13} & \\
    \cmidrule{3-11}
    \end{tabular}
    \caption{Standard deviations of AUC scores of \textsc{LOF}, \textsc{KNN}, \textsc{DN2} and \textsc{LUNAR} for different values of $k$}
    \label{tab:k_stdev}
\end{table*}

\subsection{Runtimes}

Table \ref{tab:main_runtime} shows the average runtime (in seconds) over the five trials for each method and dataset, with $k=100$ for the local outlier methods. Our method is faster than the other deep methods tested in all cases.

\begin{table*}[ht!]
    \centering
    \begin{tabular}{rcccccccccc}
    \toprule
     Dataset & \textsc{IForest} & \textsc{OC-SVM} & \textsc{LOF} & \textsc{KNN} & \textsc{AE} & \textsc{VAE} & \textsc{DAGMM} & \textsc{SO-GAAL} & \textsc{LUNAR}\\
     \midrule
     \texttt{HRSS} & 2.95 & 137.86 & 13.73 & 14.64 & 579.67 & 245.34 & 55.92 & 34.54 & 33.71 \\
     \texttt{MI-F} & 1.37 & 26.23 & 18.53 & 18.91 & 170.70 & 149.18 & 18.67 & 68.26 & 12.21 \\
     \texttt{MI-V} & 1.83 & 14.41 & 11.34 & 11.74 & 127.02 & 93.01 & 14.05 & 57.56 & 10.35 \\ 
     \texttt{OPTDIGITS} & 0.45 & 1.26 & 1.69 & 1.71 & 40.27 & 58.17 & 5.31 & 35.13 & 4.00 \\
     \texttt{PENDIGITS} & 0.38 & 0.86 & 0.59 & 0.59 & 68.44 & 76.55 & 6.19 & 38.20 & 4.00 \\
     \texttt{SATELLITE} & 0.33 & 0.54 & 0.42 & 0.46 & 41.27 & 12.18 & 4.41 & 5.30 & 3.88 \\
     \texttt{SHUTTLE} & 1.71 & 38.53 & 2.83 & 3.15 & 374.56 & 123.62 & 32.44 & 18.09 & 17.76 \\
     \texttt{THYROID} & 0.35 & 35.18 & 0.54 & 0.58 & 61.44 & 28.16 & 5.90 & 35.18 & 5.35 \\
    \bottomrule
    \end{tabular}
    \caption{Runtime of each method on each dataset. \textsc{DN2} is omitted as its runtime is virtually equivalent to \textsc{AE} as it uses the same model for feature extraction.}
    \label{tab:main_runtime}
\end{table*}

\vspace{3cm}

\subsection{Ablation Study: Negative Sampling}

Table \ref{tab:negative_sampling} shows the performance of \textsc{LUNAR} when different formulations of negative samples are used. \textsc{SP} refers to only 'Subspace Perturbation' samples while \textsc{U} refers to only 'Uniform' samples. 'Mixed' refers to combining both types.

\begin{table*}[h!]
    \centering
    \begin{tabular}{r|ccc|ccc|ccc|ccc}
    \toprule
    $k$ & $\textsc{SP}$ & $\textsc{U}$ & Mixed & $\textsc{SP}$ & $\textsc{U}$ & Mixed & $\textsc{SP}$ & $\textsc{U}$ & Mixed & $\textsc{SP}$ & $\textsc{U}$ & Mixed\\
    \midrule
    \multicolumn{1}{c}{} & \multicolumn{3}{c}{\texttt{HRSS}} & \multicolumn{3}{c}{\texttt{MI-F}} & \multicolumn{3}{c}{\texttt{MI-V}} & \multicolumn{3}{c}{\texttt{OPTDIGITS}} \\
    \midrule
    2   &   93.08 & 91.06 & 93.88 &     81.37 & 79.91 & 81.50 &     96.05 & 95.54 & 96.06 &    00.31 & 99.94 & 99.91 \\
    10  &   93.63 & 75.97 & 92.67 &     82.69 & 79.22 & 82.39 &     96.18 & 94.97 & 96.09 &    03.05 & 99.94 & 99.79 \\
    30  &   93.81 & 73.48 & 92.47 &     82.54 & 73.49 & 82.84 &     96.18 & 77.95 & 96.17 &    79.38 & 99.91 & 99.78 \\
    50  &   93.90 & 76.66 & 92.21 &     83.71 & 69.58 & 83.58 &     96.39 & 70.37 & 96.38 &    91.47 & 99.89 & 99.81 \\
    100 &   93.32 & 66.34 & 92.17 &     84.17 & 57.76 & 84.37 &     96.64 & 67.99 & 96.73 &    93.81 & 99.86 & 99.76 \\
    150 &   93.27 & 52.46 & 91.61 &     83.05 & 52.78 & 82.82 &     96.40 & 66.35 & 96.53 &    96.05 & 99.88 & 99.73 \\
    200 &   91.42 & 47.62 & 90.09 &     84.57 & 49.11 & 84.47 &     96.29 & 65.75 & 96.30 &    96.99 & 99.86 & 99.78 \\
    \midrule
    \multicolumn{1}{c}{} & \multicolumn{3}{c}{\texttt{PENDIGITS}} & \multicolumn{3}{c}{\texttt{SATELLITE}} & \multicolumn{3}{c}{\texttt{SHUTTLE}} & \multicolumn{3}{c}{\texttt{THYROID}}\\
    \midrule
    2   &   99.62 & 99.84 & 99.84 &    87.79 & 87.83 & 87.83 &     99.95 & 99.98 & 99.98 &    83.34 & 81.72 & 83.38 \\
    10  &   99.78 & 99.83 & 99.82 &    86.91 & 87.53 & 84.98 &     99.94 & 99.96 & 99.95 &    85.24 & 79.80 & 84.24 \\
    30  &   99.66 & 99.83 & 99.80 &    87.62 & 87.56 & 87.58 &     99.95 & 99.91 & 99.94 &    85.76 & 56.05 & 84.59 \\
    50  &   99.74 & 99.82 & 99.80 &    86.98 & 87.17 & 87.08 &     99.95 & 99.89 & 99.97 &    85.91 & 45.95 & 86.01 \\
    100 &   99.78 & 99.82 & 99.81 &    85.37 & 85.12 & 85.35 &     99.96 & 99.54 & 99.97 &    85.99 & 45.42 & 85.44 \\
    150 &   99.77 & 99.82 & 99.76 &    84.07 & 85.63 & 83.95 &     99.95 & 95.67 & 99.95 &    86.34 & 45.48 & 86.08 \\
    200 &   99.74 & 99.81 & 99.71 &    84.44 & 86.10 & 84.70 &     99.97 & 93.82 & 99.97 &    86.64 & 46.26 & 86.67 \\
    \bottomrule
    \end{tabular}
    \caption{Performance of \textsc{LUNAR} for each setting of $k$ and for different formulation strategies for negative sampling}
    \label{tab:negative_sampling}
\end{table*}

\subsection{Ablation Study: Hidden Layer Size}

Table \ref{tab:layer_sizes} shows the performance when we vary the size of the hidden layers used in the aggregation network in $\{64,128\}.$ A value of $256$ is used in the experiments shown in the main paper. The general pattern is that wider layers give better performance across the range of $k$ values for most datasets.

\begin{table}[h!]
\centering
\begin{tabular}{r|ccccccc}
\toprule
\multicolumn{8}{c}{Layer Size = 64} \\
\midrule
k                      & 2 & 10 & 30 & 50 & 100 & 150 & 200 \\
\midrule
\texttt{HRSS}          & 92.98 & 90.96 & 86.94 & 84.83 & 85.27 & 85.02 & 83.07 \\
\texttt{MI-F}          & 80.68 & 80.40 & 80.55 & 81.14 & 82.49 & 82.28 & 83.53 \\
\texttt{MI-V}          & 95.80 & 95.66 & 95.87 & 96.08 & 96.62 & 96.50 & 96.42 \\
\texttt{OPTDIGITS}     & 99.88 & 99.86 & 99.77 & 99.67 & 99.74 & 99.75 & 99.75 \\
\texttt{PENDIGITS}     & 99.84 & 99.83 & 99.81 & 99.80 & 99.81 & 99.55 & 99.53 \\
\texttt{SATELLITE}     & 87.81 & 87.42 & 87.72 & 87.04 & 85.04 & 84.05 & 83.92 \\
\texttt{SHUTTLE}       & 99.98 & 99.96 & 99.94 & 99.87 & 99.86 & 99.91 & 99.93 \\
\texttt{THYROID}       & 82.17 & 84.05 & 84.37 & 85.88 & 84.23 & 85.50 & 85.83 \\
\midrule
\multicolumn{8}{c}{Layer Size = 128} \\
\midrule
k             & 2 & 10 & 30 & 50 & 100 & 150 & 200 \\
\midrule
\texttt{HRSS} & 93.75 & 92.05 & 91.21 & 90.78 & 90.64 & 89.43 & 88.26 \\
\texttt{MI-F} & 81.33 & 81.82 & 82.27 & 82.59 & 83.40 & 82.69 & 84.23 \\
\texttt{MI-V} & 96.00 & 96.01 & 96.07 & 96.28 & 96.69 & 96.58 & 96.39 \\
\texttt{OPTDIGITS} & 99.88 & 99.76 & 99.72 & 99.73 & 99.68 & 99.72 & 99.73 \\
\texttt{PENDIGITS} & 99.84 & 99.83 & 99.80 & 99.81 & 99.81 & 99.69 & 99.62 \\
\texttt{SATELLITE} & 87.84 & 87.38 & 87.52 & 87.00 & 85.30 & 84.15 & 84.18 \\
\texttt{SHUTTLE} & 99.98 & 99.96 & 99.93 & 99.91 & 99.95 & 99.95 & 99.96 \\
\texttt{THYROID} & 83.00 & 84.26 & 85.34 & 85.30 & 85.05 & 85.77 & 86.52 \\
\bottomrule
\end{tabular}
\caption{Average AUC scores over five trials for \textsc{LUNAR} with different layer sizes/widths in the neural network.}
\label{tab:layer_sizes}
\end{table}

\subsection{Ablation Study: Network Depth}

In Table \ref{tab:network_depth}, we vary the depth of the network between $2$ and $3$ hidden layers. $4$ layers are used in the experiments shown in the main paper. The general pattern is that a deeper network gives better performance across the range of $k$ values for most datasets, with the difference between more significant for larger $k$.

\begin{table}[h!]
\centering
\begin{tabular}{r|ccccccc}
\toprule
\multicolumn{8}{c}{Depth = 2} \\
\midrule
k                      & 2 & 10 & 30 & 50 & 100 & 150 & 200 \\
\midrule
\texttt{HRSS} & 92.37 & 82.64 & 76.41 & 76.29 & 72.26 & 69.95 & 67.89 \\
\texttt{MI-F} & 81.30 & 78.17 & 77.87 & 78.54 & 80.32 & 81.15 & 82.69 \\
\texttt{MI-V} & 95.87 & 94.77 & 93.85 & 94.52 & 94.86 & 95.18 & 95.19 \\
\texttt{OPTDIGITS} & 99.89 & 99.90 & 99.68 & 99.78 & 99.55 & 99.58 & 99.58 \\
\texttt{PENDIGITS} & 99.84 & 99.83 & 99.81 & 99.80 & 99.80 & 99.76 & 99.74 \\
\texttt{SATELLITE} & 87.59 & 87.60 & 87.54 & 87.23 & 85.35 & 83.95 & 82.88 \\
\texttt{SHUTTLE} & 99.94 & 99.97 & 99.92 & 99.88 & 99.77 & 99.74 & 99.77 \\
\texttt{THYROID} & 82.39 & 80.61 & 79.91 & 80.19 & 81.09 & 80.16 & 80.59 \\
\midrule
\multicolumn{8}{c}{Depth = 3} \\
\midrule
k             & 2 & 10 & 30 & 50 & 100 & 150 & 200 \\
\midrule
\texttt{HRSS} & 93.39 & 91.58 & 88.67 & 86.57 & 86.87 & 85.55 & 84.80 \\
\texttt{MI-F} & 80.91 & 80.73 & 80.62 & 81.63 & 82.69 & 82.50 & 84.33 \\
\texttt{MI-V} & 95.88 & 95.81 & 95.85 & 96.03 & 96.55 & 96.39 & 96.21 \\
\texttt{OPTDIGITS} & 99.92 & 99.76 & 99.76 & 99.78 & 99.77 & 99.76 & 99.71 \\
\texttt{PENDIGITS} & 99.84 & 99.82 & 99.79 & 99.81 & 99.82 & 99.70 & 99.70 \\
\texttt{SATELLITE} & 87.85 & 86.47 & 87.55 & 87.03 & 85.35 & 84.19 & 83.81 \\
\texttt{SHUTTLE} & 99.98 & 99.96 & 99.91 & 99.87 & 99.88 & 99.92 & 99.93 \\
\texttt{THYROID} & 82.31 &  84.28 & 85.47 & 86.05 & 84.86 & 85.42 & 85.59 \\
\bottomrule
\end{tabular}
\caption{Average AUC scores over five trials for \textsc{LUNAR} with different layer counts/depth in the neural network.}
\label{tab:network_depth}
\end{table}

\vspace{3cm}

Angiulli, F.; and Pizzuti, C. 2002. Fast outlier detection in high dimensional spaces. In European conference on principles of data mining and knowledge discovery, 15–27. Springer.

Breunig, M. M.; Kriegel, H.-P.; Ng, R. T.; and Sander, J. 2000. LOF: identifying density-based local outliers. In Proceedings of the 2000 ACM SIGMOD international conference on Management of data, 93–104.

Ester, M.; Kriegel, H.-P.; Sander, J.; Xu, X.; et al. 1996. A density-based algorithm for discovering clusters in large spatial databases with noise. In kdd, volume 96, 226–231. Jin, W.; Tung, A. K. H.; Han, J.; and Wang, W. 2006. Ranking outliers using symmetric neighborhood relationship. In PAKDD, 577–593. Springer.

Kriegel, H.-P.; Kroger, P.; Schubert, E.; and Zimek, A. 2009a. LoOP: local outlier probabilities. In Proceedings of the 18th ACM conference on Information and knowledge management, 1649–1652.

Kriegel, H.-P.; Kroger, P.; Schubert, E.; and Zimek, A. 2009b. Outlier detection in axis-parallel subspaces of high dimensional data. In Pacific-asia conference on knowledge discovery and data mining, 831–838. Springer.

Pei, Y.; Zaiane, O. R.; and Gao, Y. 2006. An efficient reference-based approach to outlier detection in large datasets. In Sixth International Conference on Data Mining (ICDM’06), 478–487. IEEE.

Schubert, E.; Zimek, A.; and Kriegel, H.-P. 2014. Local outlier detection reconsidered: a generalized view on locality with applications to spatial, video, and network outlier detection. Data mining and knowledge discovery, 28(1): 190–237

Shebuti Rayana (2016).  ODDS Library [http://odds.cs.stonybrook.edu]. Stony Brook, NY: Stony Brook University, Department of Computer Science.